\documentclass{article}

\usepackage{times}
\usepackage{graphicx} 
\usepackage{wrapfig}
\usepackage{subfigure} 

\usepackage{array}

\usepackage{enumitem}

\usepackage{natbib}

\usepackage{algorithm}
\usepackage{algorithmic}

\usepackage{amsmath}
\usepackage[T1]{fontenc}
\usepackage[utf8x]{inputenc}
\usepackage{amsfonts} 
\usepackage{amsthm} 

\usepackage[svgnames]{xcolor}

\usepackage{hyperref}

\usepackage[accepted]{icml2015}

\def\:#1{\mathbf{#1}}

\renewcommand{\Re}{\mathbb{R}}

\newcommand{\autoeqref}[1]{\hyperref[#1]{\equationautorefname~(\ref*{#1})}}
\newcommand{\wt}[1]{\widetilde{#1}}
\newcommand{\wh}[1]{\widehat{#1}}
\newcommand{\transp}{\mathsf{T}}
\DeclareMathOperator*{\argmin}{arg\,min}

\DeclareMathOperator*{\Ker}{Ker}

\newcommand{\normsmall}[1]{\Vert #1 \Vert}

\newcommand{\Gg}{\mathcal{G}}
\newcommand{\Hg}{\mathcal{H}}
\newcommand{\Ag}{\mathcal{A}}
\newcommand{\vertexset}{\mathcal{V}}
\newcommand{\edgeset}{\mathcal{E}}

\newcommand{\funcspace}{\mathcal{F}}
\newcommand{\dataset}{\mathcal{D}}
\newcommand{\trainingset}{\mathcal{S}}
\newcommand{\testset}{\mathcal{T}}

\newcommand{\vareps}{\varepsilon}
\newcommand{\bigotime}{\mathcal{O}}
\DeclareMathOperator*{\polylog}{polylog}

\newtheorem{theorem}{Theorem}

\newtheorem{definition}{Definition}

\usepackage[colorinlistoftodos, textwidth=40mm, shadow]{todonotes}

\icmltitlerunning{Large-scale semi-supervised learning with online spectral graph sparsification}

\begin{document} 

\twocolumn[
\icmltitle{Large-scale semi-supervised learning with online spectral graph sparsification}

\icmlauthor{Daniele Calandriello}{daniele.calandriello@inria.fr}
\icmlauthor{Alessandro Lazaric}{alessandro.lazaric@inria.fr}
\icmlauthor{Michal Valko}{michal.valko@inria.fr}
\icmladdress{Team SequeL Inria Lille -- Nord Europe, France}

\icmlkeywords{Online-Learning, Semi-Supervised Learning, Spectral Sparsification}

]

\section{Introduction}

In many classification and regression tasks, obtaining many good-quality 
labeled examples may be expensive.
When the number of labeled examples is very small, traditional 
supervised learning algorithms fail in learning accurate predictors. 
\emph{ Semi-supervised learning }\citep[SSL,][]{chapelle2006semi-supervised} deals with this problem by 
integrating the labeled examples with an additional set of unsupervised 
samples to make use of an underlying structure
(e.g., a manifold) and reduced the need for labeling.
In this paper we consider data whose similarity can be encoded in a \emph{graph}, 
and the similarity between nodes is much easier to obtain
than their label. 
Given the graph, SSL methods leverage the assumption that nodes which are 
similar according to the graph are more likely to be labeled similarly. 
Graph-based SSL propagates the labels from the labeled nodes
to the unlabeled ones.
%
For instance, the objective of \emph{harmonic function solution}
\citep[HFS,][]{zhu_semi-supervised_2003, belkin_regularization_2004} is to find a solution where each
node's value is the weighted average of its neighbors.
The HFS solution can be found solving a linear system involving the graph
Laplacian, but for dense graphs on $n$ nodes
this amounts to $\bigotime(n^3)$ time complexity and a $\bigotime(n^2)$ space complexity,
which is infeasible for large $n$.
In this paper, we consider a more realistic setting when \emph{space and 
computational budgets are limited}. In particular, we only allow
$\bigotime(n \polylog(n))$ space for 
storing the graph structure and an amortized computational cost of
$\bigotime(\polylog(n))$ for each of the edges in the original graph. Notice that 
these  constraints make it even impossible to store the full similarity 
matrix in memory. To this end we employ efficient
online spectral graph sparsification techniques \cite{kelner_spectral_2013}
to incrementally process the stream. This, coupled with specific solvers for symmetric diagonally dominant (SDD)
matrices \cite{koutis_solving_2011}, allow us to never store the whole
graph in memory  and to control the computational complexity as the number of nodes
grows. Using the approximation properties of spectral sparsifiers and
with results from algorithmic stability theory \cite{bousquet_stability_2002, cortes_stability_2008}
we will provide theoretical guarantees for the generalization error 
for this approximation.

%


\section{SSL with Spectral Sparsification}

\textbf{Notation.}
We denote with lowercase letter $a$ a scalar, with bold lowercase letter $\:a$ 
a vector and with uppercase letters $A$ a matrix.
We consider the general transductive setting, where we assume that there 
exists a dictionary of labeled nodes $\dataset = \{(x_i,y_i)\}_{i=1}^n$, where 
the nodes are organized over an undirected graph $\Gg = (\vertexset,\edgeset)$ 
with $n$ vertices $\vertexset = \{1,\ldots, n\}$ and $m$ edges, and the labels 
are $y_i\in\Re$.
Given graphs $\Gg, \Ag$ defined on the same vertex set, the graph $\Gg+\Ag$
is obtained by adding the weights on the edges of $\Ag$ to $\Gg$. In a similar
manner we define $\Gg + e$ for edge $e$.
For $i \in \vertexset$, we denote with $\chi_i$ the indicator vector, and with $\:b_e$
the vector $\chi_i - \chi_j$.
While the algorithm receives information on the features $x_i$ 
of all nodes, only a limited (random) subset $\trainingset$ of $l$ nodes is 
actually labeled. The objective of the learning algorithm is to minimize the 
error on the complementary unlabeled set 
$\testset = \dataset \backslash \trainingset$. More precisely, the objective 
is to learn a function $\:f : \vertexset \rightarrow \Re$ that minimizes the 
generalization error $R(\:f) = \tfrac{1}{u}\sum_{i=1}^{u} (\:f(x_i) - \:y(x_i))^2$, 
where $u = |\testset| = n-l$ is the number of unlabeled nodes.
We indicate with $\:f$ and $\:y \in \Re^n$ the vectors that contain
the function and the labels evaluated at the $n$ points $x_i$.

\textbf{Stable-HFS.}
HFS exploits the graph structure to learn functions that 
predict similar values $y$ for similar nodes. Given a weighted adjacency 
matrix $A_{\Gg}$, with edge weights $a_e$, and the degree matrix $D_{\Gg}$, the 
Laplacian is defined as $L_\Gg = D_\Gg - A_\Gg$. We  assume 
the graph is connected. In this case, $L_{\Gg}$ is semi-definite positive 
(SDP) with $\Ker(L_\Gg)~=~\:1$. Let $L_{\Gg}^+$ be the pseudoinverse of 
$L_{\Gg}$, and  $L_{\Gg}^{-1/2} = (L_{\Gg}^+)^{1/2}$.
The original HFS method~\cite{zhu_semi-supervised_2003}, when
we allow the label of already labeled nodes to change, can be formulated as the
Laplacian-regularized least-squares problem
\begin{align}
    \wh{\:f} &= \argmin_{\:f \in \Re^n}  \tfrac{1}{l} (\:f - \:y)^\transp I_{\trainingset} (\:f-\:y) + \gamma \:f^\transp L_\Gg \:f\nonumber\\
            &= (\gamma l L_\Gg + I_\trainingset)^+ (\:y ), \label{eq:def-solution}
\end{align}
where $I_{\trainingset}$ is the identity matrix with zeros corresponding to 
the nodes not in $\trainingset$, and $\gamma$ is a regularizer. 
While HFS achieves interesting empirical results, it is not easy to
provide theoretical guarantees for it  due to the singularity of the
Laplacian matrix. For this reason, we focus on 
the stable-HFS algorithm proposed by~\citet{belkin_regularization_2004} where 
an additional regularization term is introduced to restrict the space of 
admissible hypothesis, so that 
$\funcspace = \left\{\:f : \langle\:f,\:1\rangle = 0\right\}$.
This restriction can be enforced introducing an additional
$\mu$ regularization term, that can be computed in closed form as
\begin{align}
    \mu = (\gamma l L_\Gg + I_\trainingset)^+\:y/(\gamma l L_\Gg + I_\trainingset)^+ \:1 \label{eq:def-mu},
\end{align}
and subtracting $\mu\:1$ from the unconstrained solution.
It can be shown that this is  equivalent to projecting
the unregularized solution using the projection matrix
$P_\funcspace = L_\Gg L_{\Gg}^+$.
While stable-HFS is more suited for theoretical analysis,
its computational and space requirements remain
polynomial.
If the graph $\Gg$ has no particular
property, solving the linear system takes $\bigotime(n^2)$ space
and $\bigotime(n^3)$ time. To satisfy our resource constraints, we include spectral sparsification in stable-HFS.
Computing the solution on a sparse graph $\Hg$ that approximates $\Gg$
removes the polynomial complexity.

\begin{algorithm}[t!]
\begin{algorithmic}
\begin{small}
    \INPUT $\{x_i: i \in \dataset\},\{y_i : i \in \trainingset\}$, a stream
    of $m$ edges $\edgeset$
    \OUTPUT $\wh{\:f}, \Hg$
    \STATE Initialize $\Hg = \emptyset, \Ag = \emptyset, t = 1$
    \WHILE{$t \leq m$}
        \FOR{$|\Ag| \leq n \log^2(n)/\varepsilon^2$}
            \STATE Receive edge $e_t$ and add it to $\Ag$
            \STATE $t = t+1$
        \ENDFOR
        \STATE Compute a new graph $\Hg$ using Alg.~\ref{alg:kl_resparsify}
        on $\Hg + \Ag$
        \STATE Build Laplacian $L_\Hg$ and diag. matrix $\{ I_{\trainingset}(i,i) = 1 : i \in \trainingset \}$
        \STATE Compute HFS $\wt{\:f}$ using Eq.~\ref{eq:def-solution} and $L_\Hg$
        \STATE $\wt{\:f} = \wt{\:f} - \mu \:1$ where $\mu$ is computed using Eq.~\ref{eq:def-mu}
    \ENDWHILE
\end{small}
\end{algorithmic}
\caption{Sparse-HFS}\label{alg:sparse_ssl}
\end{algorithm}

\begin{algorithm}[h]
\begin{algorithmic}
\begin{small}
\INPUT $\Hg,\Ag$, the previous probabilities $\wt{p}_e$ for all 
    edges in $\Hg$ and the weights of the edges $a_e$.
    \OUTPUT $\Hg'$, a $1 \pm \vareps$ sparsifier of $\Gg' = \Gg + \Ag$ and new 
    prob. $\{\wt{p}'_e: e \in \Hg'\}$.
    \STATE $\alpha^2 = 1/(1-\vareps)^2, N = \alpha^2 n \log^2 (n) / \vareps^2$
    \STATE Obtain estimates $\{\wt{R}'_e: e \in \Hg+\Ag\}$  such that \\ \quad $1/\alpha \leq \wt{R}'_e/R'_e \leq \alpha  $
    with an SDD solver \cite{koutis_solving_2011}
    \STATE Compute prob. $\wt{p}'_e = (a_e \wt{R}'_e)/(\alpha(n-1))$  and $w_e = a_e/(N\wt{p}'_e)$
    \FOR{all edges $e \in \Hg$}
        \STATE $\wt{p}'_e \leftarrow \min\{\wt{p}_e,\wt{p}'_e\}$
    \ENDFOR
    \STATE Initialize $\Hg' = \emptyset$
    \FOR{all edges $e \in \Hg$}
    \STATE with probability $\wt{p}'_e/\wt{p}_e$ add edge $e$ to $\Hg'$ with weight $w_e$
    \ENDFOR
    \FOR{all edges $e \in \Ag$}
        \STATE /*The inner loop is run implicitly by sampling a binomial*/
        \FOR{$ i = 1 $ to $N$}
            \STATE with probability $\wt{p}'_e$ add edge $e$ to $\Hg'$ with weight $w_e$
        \ENDFOR
    \ENDFOR
    \end{small}
\end{algorithmic}
\caption{Kelner-Levin Sparsification Algorithm}\label{alg:kl_resparsify}
\end{algorithm}

\textbf{Sparse-HFS.} Spectral sparsifiers have been central in the 
development of efficient linear solvers \cite{koutis_solving_2011}. Since their 
introduction by \citet{spielman_spectral_2011}, they were extended
to insertion-only streams \cite{kelner_spectral_2013}.
\begin{definition}\label{def:eps-sparsifier}
    A $ 1 \pm \vareps$ spectral sparsifier of $\Gg$ is a graph $\Hg \subseteq \Gg$
    such that for all $\:x$
    \begin{align*}
        &(1-\vareps)\:x^\transp L_{\Gg} \:x \leq \:x^\transp L_\Hg \:x \leq (1 + \vareps) \:x^\transp L_{\Gg} \:x
    \end{align*}
\end{definition}
In this paper, we propose to spectral sparsify $\Gg$ to reduce the complexity of HFS.
A sparse graph $\Hg$ can be stored efficiently, but if the construction
of the sparsifier requires access to the whole $\Gg$ graph at every moment,
just storing the original graph in memory can be impossible. Moreover,
traditional linear solvers for an $n \times n$ matrix with $m$ nonzero entries
have a time complexity of $\bigotime(mn)$, which is already infeasible for $m = n$.
To meet our space and time requirements, we propose to build the sparsifier incrementally
using Alg.~\ref{alg:kl_resparsify}~\cite{kelner_spectral_2013}.
Sparse-HFS (Alg.~\ref{alg:sparse_ssl}) receives as input a previous sparsifier $\Hg$ and a
stream of edges insertions (i.e., from a disk or a network)
and stores them in memory until
a graph $\Ag$ with $\bigotime(n \polylog(n))$ edges has formed. At this point,
the sparsifier $\Hg$ gets updated, generating a new sparsifier that again
occupies only $\bigotime(n \polylog(n))$ space.
The key component in generating the sparsifier is random sampling according
to the effective resistances.
The effective resistance of an edge $e$ is defined as $R_e = \:b_e^\transp L_{\Hg}^+ \:b_e$.
Computing $R_e$ na\"{\i}vely requires again $\bigotime(n^2\polylog(n))$ time
and is not feasible in general.
Using linear solvers for SDD matrices \cite{koutis_solving_2011}
we can get $R_e$ for all edges in $\Hg$
in $\bigotime(n\polylog(n))$ time.
Using the same solver, recomputing the updated solution $\wt{\:f}$
for the updated sparsifier $\Hg$ takes the same time. Therefore, the whole update procedure
takes $\bigotime(n \polylog(n))$. Note that updating the
solution at each step is still possible, but it will not meet the
computational budget of a $\bigotime(\polylog(n))$ amortized cost.

\section{Theoretical Analysis}

By a good approximation of quadratic forms, spectral sparsifiers give 
many guarantees on eigenvalues, eigenvectors and solutions to linear 
systems.
Let $P_{\funcspace} = L_{\Gg} L_{\Gg}^+$ be the
projection matrix on the $n-1$ dimensional space
$\Ker(L_{\Gg})^\transp = \funcspace$.
We derive a bound on the generalization error for sparse-HFS and compare 
it to the original stable-HFS. 
We start with the definition of a stable algorithm.

\begin{definition}[Transduction $\beta$-stability]\label{def:beta-stability}
    Let $\mathcal{L}$ be a
    transductive learning algorithm and let $\:f$ denote the
    hypothesis returned by $\mathcal{L}$ for $\dataset = (\trainingset, \testset)$ and $\:f'$ the hypothesis
    returned for $\dataset = (\trainingset', \testset')$. $\mathcal{L}$ is uniformly
    $\beta$-stable with respect to the squared loss if there exists $\beta \geq 0$ such that for any two partitions
    $\dataset = (\trainingset, \testset)$ and $\dataset = (\trainingset', \testset')$ that differ in exactly one
    training (and thus test one) point and for all $x \in \dataset$,
    \begin{align*}
        |(\:f(x) - \:y(x))^2 - (\:f'(x) - \:y(x))^2| \leq \beta.
    \end{align*}
\end{definition}

The analysis of algorithmic stability \cite{bousquet_stability_2002}
has been extensively used
in statistic for concentration inequalities in the transductive setting
\cite{el-yaniv_stable_2006} and later for algorithmic guarantees
\cite{cortes_stability_2008}.
Define the empirical error as $\wh{R}(\:f) = \tfrac{1}{l} \sum_{i=1}^{l} (\:f(x_i) - \:y(x_i))^2$
and the generalization error as $R(\:f) = \tfrac{1}{u}\sum_{i=1}^{u} (\:f(x_i) - \:y(x_i))^2$.

\begin{theorem}
    \label{thm:sparse-ssl-generalization}
    Let $|\:f(x) - \:y(x)| \leq c$ and $|\:y(x)| \leq k$ for all $x \in
    \dataset, \:f \in \funcspace$. Let
    $\wt{\:f}$ be the hypothesis returned by sparse-HFS
    (Alg.~\ref{alg:sparse_ssl}) when trained on
    $\dataset = (\trainingset, \testset)$, and $\wh{\:f}$ the solution
    returned by stable-HFS.
    Then for any $\delta > 0$, with probability at least $1 - \delta$,
    \begin{align*}
        R(\wt{\:f}) &\leq \widehat{R}(\wh{\:f}) + \frac{ l^2 \gamma^2 \lambda_n^2 k^2 \varepsilon^2 }{(l \gamma (1 - \varepsilon)\lambda_{1} - 1)^4}\\
                &+ \beta + \left(2\beta + \frac{c^2(l+u)}{lu}\right)\sqrt{\frac{\pi(l,u)\ln\frac{1}{\delta}}{2}},
    \end{align*}
    where
    \begin{align*}
        \pi(l,u) &= \frac{lu}{l+u-0.5} \frac{1}{1-1/(2\max\{l,u\})},
    \end{align*}
    and
    \begin{align*}
        \beta &\leq \frac{1.5 k\sqrt{l}}{(l \gamma (1-\varepsilon)\lambda_{1}-1)^{2}} + \frac{\sqrt{2}k}{l \gamma (1 - \varepsilon)\lambda_{1}-1}.
    \end{align*}
\end{theorem}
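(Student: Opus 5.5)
We split the error of $\wt{\:f}$ into a stability-based generalization term for sparse-HFS and an approximation term comparing the sparse and exact solutions:
\begin{align*}
R(\wt{\:f}) = \big(R(\wt{\:f}) - \widehat{R}(\wt{\:f})\big) + \big(\widehat{R}(\wt{\:f}) - \widehat{R}(\wh{\:f})\big) + \widehat{R}(\wh{\:f}).
\end{align*}

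\textbf{Generalization term.} We view sparse-HFS as stable-HFS run on the Laplacian $L_\Hg$. We then apply the transductive stability bound of \citet{cortes_stability_2008}, derived from the concentration inequality of \citet{el-yaniv_stable_2006}. For a uniformly $\beta$-stable algorithm with loss bounded by $c^2$, this gives, with probability $1-\delta$, $R - \widehat{R} \le \beta + (2\beta + c^2(l+u)/(lu))\sqrt{\pi(l,u)\ln(1/\delta)/2}$. This is exactly the last line of the statement.

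It remains to bound $\beta$ for stable-HFS on $\Hg$. We redo the argument of \citet{belkin_regularization_2004} and \citet{cortes_stability_2008}. Changing one labeled point changes $I_\trainingset$ and $\:y$ in one coordinate. The perturbation of the solution $(\gamma l P_\funcspace L_\Hg P_\funcspace + P_\funcspace I_\trainingset P_\funcspace)^{-1}$ restricted to $\funcspace$ is controlled by the smallest eigenvalue of $\gamma l L_\Hg$ on $\funcspace$ minus $\|I_\trainingset\|=1$. Here $\lambda_1$ denotes the smallest nonzero eigenvalue of $L_\Gg$. By Definition~\ref{def:eps-sparsifier} and Courant--Fischer on $\funcspace$, the smallest nonzero eigenvalue of $L_\Hg$ is at least $(1-\varepsilon)\lambda_1$. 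Substituting this into the bound $\beta \le \frac{1.5k\sqrt l}{(l\gamma\lambda_1-1)^2} + \frac{\sqrt2 k}{l\gamma\lambda_1-1}$ of \citet{cortes_stability_2008} yields the stated $\beta$. The $\sqrt l$ comes from $\|\:y_\trainingset\|\le k\sqrt l$, and the second term from the single changed label.

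\textbf{Approximation term.} We write $\wh{\:f} = (l\gamma L_\Gg + I_\trainingset)^{-1}_\funcspace \:y$ and $\wt{\:f} = (l\gamma L_\Hg + I_\trainingset)^{-1}_\funcspace \:y$ on $\funcspace$. We use $A^{-1}-B^{-1} = A^{-1}(B-A)B^{-1}$, which gives $\|\wh{\:f}-\wt{\:f}\| \le \|A^{-1}\|\,\|B^{-1}\|\, l\gamma\|L_\Gg - L_\Hg\|\,\|\:y\|$. The sparsifier property gives $-\varepsilon L_\Gg \preceq L_\Hg - L_\Gg \preceq \varepsilon L_\Gg$, so $\|L_\Gg-L_\Hg\|\le\varepsilon\lambda_n$. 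Both inverse norms are at most $1/(l\gamma(1-\varepsilon)\lambda_1-1)$. Squaring the resulting bound gives $l^2\gamma^2\lambda_n^2\varepsilon^2 k^2/(l\gamma(1-\varepsilon)\lambda_1-1)^4$, with $\|\:y\|$ measured over the labeled coordinates and normalized by $l$.

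We then relate $\widehat{R}(\wt{\:f})$ to $\widehat{R}(\wh{\:f})$ through this distance. Since $\widehat{R}(\wt{\:f}) \le \widehat{R}(\wh{\:f}) + \tfrac1l\|\wt{\:f}-\wh{\:f}\|^2 + $ a cross term, the cross term must be absorbed. We would use the first-order optimality of $\wh{\:f}$, or simply bound the squared loss gap in the stated form.

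\textbf{Main obstacle.} The hardest step is this cross term, which we would handle via optimality conditions. A second difficulty is keeping the $\varepsilon$ dependence consistent between $L_\Hg$ and $L_\Gg$ when restricting to $\funcspace$, since the projection $P_\funcspace$ is shared: both Laplacians have kernel $\:1$ when $\Hg$ is connected, which the sparsifier guarantees.
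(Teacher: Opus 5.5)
Your outline follows the paper's proof step for step. Step~1 is the transductive stability bound of Cortes et al. Step~2 is the stability computation with $\lambda_1$ replaced by $(1-\varepsilon)\lambda_1$. Step~3 is the resolvent identity combined with $\|L_\Gg-L_\Hg\|\le\varepsilon\lambda_n$ and $\|\mathbf y_\trainingset\|\le\sqrt l\,k$, and your computation of $\tfrac1l\|\wt{\mathbf f}-\wh{\mathbf f}\|^2$ reproduces exactly the $\varepsilon^2$ term.

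The gap is the cross term you flag, and first-order optimality does not remove it. With $g=\wt{\mathbf f}-\wh{\mathbf f}\in\funcspace$,
\[
\wh R(\wt{\mathbf f})-\wh R(\wh{\mathbf f})=\tfrac1l\|I_\trainingset g\|^2+\tfrac2l\langle I_\trainingset(\wh{\mathbf f}-\mathbf y),g\rangle .
\]
The optimality condition $P_\funcspace\big(I_\trainingset(\wh{\mathbf f}-\mathbf y)+l\gamma L_\Gg\wh{\mathbf f}\big)=0$ only rewrites the cross term as $-2\gamma\langle L_\Gg\wh{\mathbf f},g\rangle$. That expression is linear in $g$, hence of order $\varepsilon$ rather than $\varepsilon^2$, and it has no fixed sign. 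The reason is that $\wh{\mathbf f}$ minimizes $\wh R+\gamma\mathbf f^\transp L_\Gg\mathbf f$, not $\wh R$ alone.

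Here is a concrete failure. The choice $L_\Hg=(1+\varepsilon)L_\Gg$ satisfies Definition~\ref{def:eps-sparsifier}, and then $\wt{\mathbf f}$ is stable-HFS with $\gamma$ replaced by $(1+\varepsilon)\gamma$. Write $\mathbf f_\gamma$ for the solution at parameter $\gamma$ and $Q_\gamma=P_\funcspace(l\gamma L_\Gg+I_\trainingset)$. Then $\tfrac{d}{d\gamma}\wh R(\mathbf f_\gamma)=2l\gamma\langle L_\Gg\mathbf f_\gamma,Q_\gamma^{-1}L_\Gg\mathbf f_\gamma\rangle>0$ whenever $\mathbf f_\gamma\neq0$. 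So $\wh R(\wt{\mathbf f})-\wh R(\wh{\mathbf f})$ is positive and of exact order $\varepsilon$, and for small $\varepsilon$ it exceeds the claimed $O(\varepsilon^2)$ correction.

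The paper does not close this gap either. In its Step~3 it passes from $\tfrac1l\|I_\trainingset\wt{\mathbf f}-I_\trainingset\wh{\mathbf f}+I_\trainingset\wh{\mathbf f}-\mathbf y_\trainingset\|^2$ to the sum of the two squared norms. That is the inequality $\|a+b\|^2\le\|a\|^2+\|b\|^2$, which is false in general; the rest of its Step~3 coincides with your computation. So the stated form, with coefficient $1$ on $\wh R(\wh{\mathbf f})$ and only an $\varepsilon^2$ correction, cannot be reached by this route.

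What the argument does give, by the triangle inequality, is $\wh R(\wt{\mathbf f})\le\big(\sqrt{\wh R(\wh{\mathbf f})}+\Delta\big)^2$ with $\Delta=l\gamma\lambda_n k\varepsilon/(l\gamma(1-\varepsilon)\lambda_1-1)^2$. This is the stated bound plus the extra term $2\Delta\sqrt{\wh R(\wh{\mathbf f})}$; alternatively one gets the cruder $2\wh R(\wh{\mathbf f})+2\Delta^2$. The theorem should be amended accordingly.

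A smaller point, also shared with the paper: under Definition~\ref{def:beta-stability}, $|(f(x)-y(x))^2-(f'(x)-y(x))^2|\le 2c\,|f(x)-f'(x)|$. Turning the bound on $\|\wt{\mathbf f}-\wt{\mathbf f}'\|$ into a loss-stability constant $\beta$ therefore costs a factor $2c$, which neither argument tracks.
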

Theorem~\ref{thm:sparse-ssl-generalization} shows 
how approximating $\Gg$ with $\Hg$ impacts
the generalization error as the number of labeled samples $l$ increases.
If we compare the bound to the exact case
($\varepsilon = 0$), we see that for a fixed $\varepsilon$ the rate of
convergence remains unchanged.
The first term $\varepsilon^2/l^2(1-\varepsilon)^4$ captures the increase
of the empirical error due to the approximation. Since for a fixed
$\varepsilon$ this term scales as $1/l^2$, it is shadowed by the $\beta$ term.
The $\beta$ term itself preserves the same order of convergence,
and is only multiplied by a constant due to the presence of $(1-\varepsilon)$.
In conclusion, for a fixed $\varepsilon$ the approximated algorithm provides
guarantees of the same order as the exact one. This allows us to freely choose
$\varepsilon$ to tradeoff precision and computational complexity.

\begin{proof}

\textbf{Step 1 (generalization of stable algorithms).} When
$\mathcal{L}$ is a transductive algorithm with stability $\beta$, then for 
any $\delta > 0$, with probability at least $1 - \delta$ (w.r.t. the 
randomness of the partition of the graph in labeled and unlabeled sets 
$\trainingset$, $\testset$) the hypotesis $\wt{\:f}$ returned by the algorithm
satisfies
    \begin{align*}
        R(\wt{\:f}) \leq \widehat{R}(\wt{\:f}) + \beta + \left(2\beta + \frac{c^2(l+u)}{lu}\right)\sqrt{\frac{\pi(l,u)\ln\frac{1}{\delta}}{2}},
    \end{align*}
hence it is sufficient to study the stability of sparse-HFS and relate its empirical loss to the result of stable-HFS to obtain the final result.

\textbf{Step 2 (stability).}
Let $\trainingset$ and $\trainingset'$ be two different realizations differing
only in  one label. For simplicity we will assume that $I_{\trainingset}(l,l) = 1$
and $I_{\trainingset}(l+1,l+1) = 0$, and the opposite for $I_{\trainingset'}$.
The original proof \cite{cortes_stability_2008} showed that for
 the two hypotheses returned by stable-HFS,
$\beta \leq \normsmall{\wh{\:f} - \wh{\:f}'}$. Similarly,
for our algorithm $\beta\leq\normsmall{\wt{\:f} - \wt{\:f}'}$.
All that is left is to upper bound  the norm.
The spectral radius of $I_{\trainingset}$ is $1$.
On the other hand, while $\lambda_0 = 0$, the smallest eigenvalue of 
$L_{\Hg}$ restricted to $\funcspace$ is $\lambda_{1}$.
This reduction of the spectral radius of the Laplacian over
the restricted space $\funcspace$ plays a critical role in the proof,
and motivates the choice of this particular constraint.
Let $\:y_\trainingset = I_{\trainingset}\:y$,
$A=P_{\funcspace}(l \gamma L_{\Hg}+I_{\trainingset})$ and $B=P_{\funcspace}(l \gamma L_{\Hg}+I_{\trainingset'})$. The hypotheses $\wt{\:f}$ and $\wt{\:f}'$ returned by sparse-HFS are given by $\wt{\:f}=A^{-1}\:y_{\trainingset}$ and $\wt{\:f}'=B^{-1}\:y_{\trainingset'}$.
We have
\begin{align*}
    \wt{\:f}-\wt{\:f}'&=A^{-1}\:y_{\trainingset}-B^{-1}\:y_{\trainingset'}\\
    &=A^{-1}(\:y_{\trainingset}-\:y_{\trainingset'})+A^{-1}\:y_{\trainingset'}-B^{-1}\:y_{\trainingset'}
\end{align*}
Therefore,
\begin{align*}
    \Vert \wt{\:f}-\wt{\:f}'\Vert \leq\Vert A^{-1}(\:y_{\trainingset}-\:y_{\trainingset'})\Vert+\Vert A^{-1}\:y_{\trainingset'}-B^{-1}\:y_{\trainingset'}\Vert.
\end{align*}
    Noticing that $\funcspace$
    is invariant under $L_{\Hg}$ and that for any vector 
	$P_{\funcspace}$ is an orthogonal 
    projection operator, then by the triangle inequality we immediately 
    obtain that for any $\:f \in \funcspace$
    \begin{align*}
        \Vert P_{\funcspace}(l\gamma L_{\Hg}+I_{S})\:f\Vert&\geq\Vert P_{\funcspace}l\gamma L_{\Hg}\:f\Vert-\Vert P_{\funcspace}I_{S}\:f\Vert\\
        &\geq(l \gamma (1-\varepsilon)\lambda_{1}-1)\Vert \:f\Vert
    \end{align*}
It follows that the spectral radius of the inverse operator $(P_{\funcspace}(l\gamma L_{\Hg}+I_{\trainingset}))^{-1}$ and therefore of $A^{-1}$ and $B^{-1}$ does not exceed $1/(l \gamma (1-\varepsilon) \lambda_{1} - 1)$ when restricted to $\funcspace$ (the inverse is not even defined outside of $\funcspace$).
This together with $\Vert\:y_{\trainingset}-\:y_{\trainingset'}\Vert\leq \sqrt{2}k$
gives us
\begin{align*}
    \Vert A^{-1}(\:y_{\trainingset}-\:y_{\trainingset'})\Vert\leq\frac{\sqrt{2}k}{l \gamma (1 - \varepsilon)\lambda_{1}-1}
\end{align*}
On the other hand, it can be checked that $\Vert\:y_{\trainingset'}\Vert\leq \sqrt{l}k$. Noticing that the spectral radius of $P_{\funcspace}(I_{\trainingset}-I_{\trainingset'})$ cannot exceed $\sqrt{2}<1.5$, we obtain:
\begin{align*}
    &\Vert A^{-1}\:y_{\trainingset'}-B^{-1}\:y_{\trainingset'}\Vert=\Vert B^{-1}(B-A)A^{-1}\:y_{\trainingset'}\Vert\\
    &=\Vert B^{-1}P_{\funcspace}(I_{\trainingset}-I_{\trainingset'})A^{-1}\:y_{\trainingset'}\Vert\leq\frac{1.5 k\sqrt{l}}{(l \gamma (1-\varepsilon)\lambda_{1}-1)^{2}}
\end{align*}
Putting it all together
\begin{align*}
\Vert \wt{\:f}-\wt{\:f}'\Vert \leq \frac{1.5 k\sqrt{l}}{(l \gamma (1-\varepsilon)\lambda_{1}-1)^{2}} + \frac{\sqrt{2}k}{l \gamma (1 - \varepsilon)\lambda_{1}-1}
\end{align*}
\textbf{Step 3 (empirical error).} We can now proceed with the proof of Thm.~\ref{thm:sparse-ssl-generalization}.
    We have already bounded $\beta$ when using $\Hg$ instead of $\Gg$. We can
     also provide guarantees for the difference in the empirical error using
    the sparsifier. Given $\wt{Q} = P_{\funcspace}(l \gamma L_{\Hg}+I_{\trainingset})$, $\wh{Q} = P_{\funcspace}(l \gamma L_{\Gg}+I_{\trainingset})$ we have
    \begin{align*}
        \wh{R}(\wt{\:f}) &= \frac{1}{l}\sum_{i=1}^{l} \left(\wt{\:f}(x_i) - \:y(x_i)\right)^2\nonumber\\
                         &= \tfrac{1}{l} \normsmall{I_\trainingset \wt{\:f} -I_\trainingset \wh{\:f} +I_\trainingset \wh{\:f}  - \:y_{\trainingset}}^2\nonumber\\
                         &\leq \tfrac{1}{l} \normsmall{I_\trainingset \wh{\:f}  - \:y_{\trainingset}}^2 + \frac{1}{l} \normsmall{I_\trainingset \wt{\:f} -I_\trainingset \wh{\:f} }^2 \nonumber\\
        &\leq \wh{R}(\wh{\:f}) + \tfrac{1}{l} \normsmall{I_\trainingset(\wt{Q}^{-1} -\wh{Q}^{-1}) \:y_{\trainingset} }^2\nonumber\\
        &\leq \wh{R}(\wh{\:f}) + \tfrac{1}{l} \normsmall{\wh{Q}^{-1}(\wh{Q} -\wt{Q})\wt{Q}^{-1} \:y_{\trainingset} }^2\nonumber\\
        &\leq \wh{R}(\wh{\:f}) + \frac{lk^2}{l(l \gamma (1 - \varepsilon)\lambda_{1} - 1)^4} \normsmall{\wh{Q} -\wt{Q}}^2
    \end{align*}
    We now need to bound 
    $\normsmall{\wh{Q} -\wt{Q}}^2 = \normsmall{P_{\funcspace} l \gamma (L_{\Gg} - L_{\Hg})}^2$.
    Let $y = L_{\Gg}^{1/2}x$ and $\wt{P}_{\funcspace} = L_{\Gg}^{-1/2}L_{\Hg}L_{\Gg}^{-1/2}$. Def.~\ref{def:eps-sparsifier} implies
\begin{align*}
    &(1-\vareps) P_{\funcspace}\leq \wt{P}_{\funcspace} \leq (1 + \vareps) P_{\funcspace}.
\end{align*}
    Since $\Hg$ is a sparsifier of $\Gg$, by definition we get
    \begin{align*}
        \normsmall{P_{\funcspace} & l \gamma (L_{\Gg} - L_{\Hg})}^2 \leq l^2 \gamma^2 \normsmall{ L_{\Gg} - L_{\Hg}}^2\nonumber\\
        &\leq l^2 \gamma^2 \normsmall{L_{\Gg}^{1/2}(P_{\funcspace} - \wt{P}_{\funcspace})L_{\Gg}^{1/2}}^2 \leq l^2 \gamma^2 \lambda_n^2 \varepsilon^2.
    \end{align*}
    The statement of the theorem is obtained by the combination of the above.
\end{proof}

\section{Experiments}

\begin{figure}
    \centering
    \begin{tabular}{>{\footnotesize\centering\arraybackslash}m{2cm}>{\footnotesize\centering\arraybackslash}m{6cm}}
        \subfigure[]{\includegraphics[height=5cm]{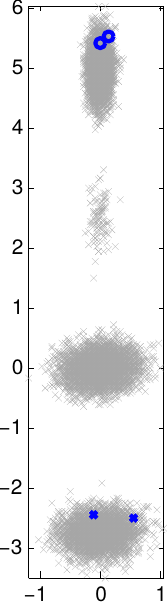} \label{fig:gen-data}}&
        \subfigure[]{\includegraphics[height=5.3cm]{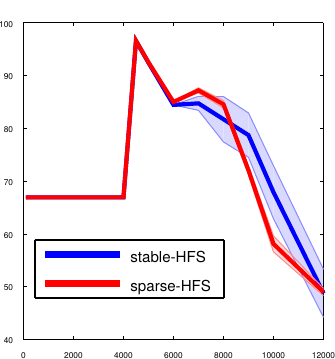}\label{fig:gen-error}}\\
    ~ 
    \vspace{-1cm}
    \begin{itemize}[noitemsep,topsep=0pt,parsep=0pt,partopsep=0pt]
            \item[\subref{fig:gen-data}] {Data}
            \item[\subref{fig:gen-error}] {$R(\wh{\:f})$ vs $R(\wt{\:f})$}
            \item[\subref{fig:edge-ratio}] {$|\Hg|/|\Gg|$}
    \end{itemize}
        &\subfigure[]{\includegraphics[height=2cm]{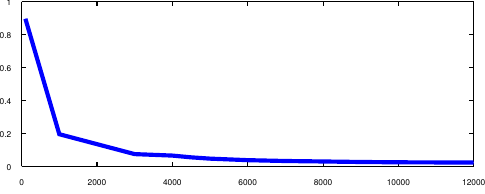}\label{fig:edge-ratio}}
\end{tabular}
\end{figure}
We evaluate the proposed algorithm on the $\Re^2$ data reported in Fig.~\subref{fig:gen-data}
which we designed to show the effect of the spectral sparsification in the case 
when a rather dense graph is needed for a good performance.
The dataset is composed of
$n = 12100$ points, where the two upper clusters belong to one class and the two
lower to the other. We build an unweighted, $k$-nn graph $\Gg$
for
$k = {100,\ldots,12000}$. This gives us values for $m$ ranging from $1.21 \times 10^6$
to $1.38 \times 10^8$ edges. After constructing the graph, we randomly select two points
 from the uppermost and and two from lowermost cluster as our labeled set $\trainingset$.
We then run sparse-HFS to compute $\Hg$ and $\wt{\:f}$, and run stable-HFS
on $\Gg$ to compute $\wh{\:f}$, both with $\gamma = 1$. For sparse-HFS we set
$\varepsilon = 0.8$
Using the labels in $\testset$ we compute the generalization error $R$,
which corresponds to the accuracy.
Fig.~\subref{fig:gen-error} reports the performance of the two algorithms.
Both algorithms fail to recover a good solution until $k > 4000$. This
is due to the fact that until a certain threshold of neighbours is not surpassed, each
cluster remains separated and the labels cannot propagate.
Even after this threshold, sparse-HFS cannot consistently outperform stable-HFS in accuracy. This is
because they are both trying to approximate the stable-HFS solution, but
sparse-HFS uses an approximated matrix~$\Hg$. Nonetheless, the difference in performance
is not large, especially near the optimum.
This is in line with the theoretical analysis that shows that the contribution
due to the approximation error has the same order of magnitude as the
other elements in the bound.
Furthermore, in Fig.~\subref{fig:edge-ratio}
we report the ratio of the number of edges in the sparsifier $\Hg$ over
the number of edges in the orginal graph $\Gg$. Since $\Hg \subseteq \Gg$,
this quantity is always smaller than one, but we can see that for $k = 4500$,
where the accuracy is at its maximum, the sparsifier contains only about 10\%
as many edges as the original graph, with similar accuracy.

\section{Conclusions and Future Work}

We introduced Sparse-HFS, a scalable algorithm that can compute
solutions to SSL problems using only $\bigotime(n\polylog(n))$ space and
$\bigotime(m\polylog(n))$ time. This is achieved in the semi-streaming setting, where
a stream of edges insertion is presented to the algorithm.
Extending this approach to also deal with edge removals in the stream may not be 
trivial.  The approach taken in \cite{kapralov_single_2014}
resorts to sketches to
keep track of all the updates, but this implicit representation requires
$\bigotime(n^2 \polylog(n))$ time to compute the final SSL solution.
In the large scale setting we target an $\bigotime(n^2)$ operation is too
costly to meet our amortized cost, so we limit our attention to insertion-only streaming setting.
Extending sparsification techniques to the full dynamic setting
in a computationally efficient manner is an interesting open problem.

\ifdefined\isaccepted
{\small
\paragraph*{Acknowledgments}
We would like to thank Ioannis Koutis for many useful discussions.
}
\else \newpage
\fi

\bibliography{workshop_sparse_ssl,library}
\bibliographystyle{icml2015}

\end{document}